\pgfplotsset{compat=newest}
\tikzset{spy using overlaysshadow/.style={
    spy scope={#1,
         every spy on node/.style={
            circle,
            fill, fill opacity=0.2, text opacity=1
            },
         every spy in node/.style={
                 circle, circular drop shadow,
                 fill=white, draw, ultra thick, cap=round
            }
        }
    }
}
\declaretheoremstyle[
notefont=\bfseries, notebraces={}{},
bodyfont=\itshape,
postheadspace=0.5em,
numbered=no,
]{mystyle}
\declaretheorem[style=mystyle]{theorem}
\declaretheorem[style=mystyle]{lemma}
\theoremstyle{remark}
\theoremstyle{definition}
\newif\ifproof
\newcommand{\PMGA}[1][~]{PMGA#1}
\newcommand{\transpose}[1]{{#1}^\texttt{T}}
\newcommand{\norm}[2][\infty]{\left\|#2\right\|_{#1}}
\newcommand{\realspace}{\mathbb R}		% realspace
\newcommand{\statespace}{\mathcal S}		% state space
\newcommand{\pmodel}{\mathcal P}		% transition function
\newcommand{\rmodel}{\mathcal R}		% reward function
\newcommand{\jvect}{\mathbf{J}}			% score vector
\newcommand{\frontier}{\mathcal{F}}
\newcommand{\curveparam}{\boldsymbol{\rho}}
\newcommand{\pp}{\theta} 			% policy params
\newcommand{\ppvect}{\boldsymbol{\pp}}		% policy params vector (bold)
\newcommand{\dparam}{d}
\newcommand{\vecop}{\text{vec }}
\newcommand{\Jval}[1][\ppvect]{J\left({#1}\right)}
\newcommand{\dstate}{n}
\newcommand{\daction}{m}
\newcommand{\dobj}{q}
\newcommand{\dpolicy}{d}
\newcommand{\dmandim}{b}
\newcommand{\traj}{\tau}
\newcommand{\trajset}{\mathbb{T}}
\newcommand{\pdfunc}[1]{p\left(#1\right)}
\newcommand{\determinant}[1]{det\left({#1}\right)}
\newcommand{\JacobJ}[1][]{D_{\ppvect}\jvect_{{#1}}(\ppvect)}
\newcommand{\Hessian}{H}
\newcommand{\HJ}[1][i]{\Hessian_{\ppvect}\jvect_{{#1}}(\ppvect)}
\newcommand{\HJhat}[1][i]{\widehat{\Hessian}_{\ppvect}\jvect_{{#1}}(\ppvect)}
\newcommand{\Rbound}{\overline{R}}
\newcommand{\Dbound}{\overline{D}}
\newcommand{\Hbound}{\overline{G}}
\newcommand{\trajlength}{H}
\newcommand{\Indf}{\mathcal{I}}
\newcommand{\vol}[1]{Vol\left( {#1} \right)}
\newcommand{\polmap}{\phi_{\curveparam}}
\newcommand{\variable}{\mathbf{t}}
\newcommand{\polmapspace}{\mathcal{T}}
\newcommand{\TJP}{\mathbf{T}}
\title{Multi--objective Reinforcement Learning with Continuous Pareto Frontier Approximation Supplementary Material}
\author{Matteo Pirotta \and Simone Parisi \and Marcello Restelli\\[.1cm]
\normalsize
Department of Electronics, Information and Bioengineering, Politecnico di Milano,\\
\normalsize
Piazza Leonardo da Vinci, 32, 20133, Milan, Italy\\
\normalsize
matteo.pirotta@polimi.it,
simone.parisi@mail.polimi.it,
marcello.restelli@polimi.it
}
\date{}
\begin{document}

\maketitle
\begin{abstract}
This document contains supplementary material for the paper ``Multi--objective Reinforcement Learning with Continuous Pareto Frontier Approximation'', 
published at the \textit{Twenty--Ninth AAAI Conference on Artificial Intelligence (AAAI-15)}.
% Concerning the numerical simulation, we add some figures that help in understanding the behavior of the proposed approach.
The paper is about learning a continuous approximation of the Pareto frontier in Multi-Objective Markov Decision Problems (MOMDPs). We propose a policy-based approach that exploits gradient information to generate solutions close to the Pareto ones. Differently from previous policy-gradient multi-objective algorithms, where n optimization routines are use to have n solutions, our approach performs a single gradient-ascent run that at each step generates an improved continuous approximation of the Pareto frontier. The idea is to exploit a gradient-based approach to optimize the parameters of a function that defines a manifold in the policy parameter space so that the corresponding image in the objective space gets as close as possible to the Pareto frontier. Besides deriving how to compute and estimate such gradient, we will also discuss the non-trivial issue of defining a metric to assess the quality of the candidate Pareto frontiers. Finally, the properties of the proposed approach are empirically evaluated on two interesting MOMDPs.
\end{abstract}

The paper ``Multi--objective Reinforcement Learning with Continuous Pareto Frontier Approximation'' has been published at the Twenty--Ninth AAAI Conference on Artificial Intelligence (AAAI-15). 
This supplement follows the same structure of the main article. For each section we report the complete set of proofs and some additional details. 

\section{Reparametrization}
In this section we provide the proof of an extended version of Theorem 1.
\begin{theorem}[1] 
  Let $\polmapspace$ be an open set in $\realspace^{\dmandim}$, let $\frontier_{\curveparam}\left( \polmapspace \right)$ be a manifold parametrized by a smooth map 
%  $\directmap : \polmapspace \to \realspace^{\dobj}$. Provided that the map $\directmap$ can be
expressed as composition of maps $\jvect$ and $\polmap$, ($\jvect \circ \polmap :\polmapspace \to  \realspace^{\dobj}$).
Given a continuous function $\Indf$ defined at each point of $\frontier_{\curveparam}(\polmapspace)$, the integral w.r.t. the volume is given by
 \begin{align*}
J(\curveparam) 
 &= \int_{\frontier(\polmapspace)}\Indf \mathrm{d}V 
  = \int_{\polmapspace} \left( \Indf \circ \left( \jvect \circ \polmap \right) \right) \vol{ D_{\ppvect}\jvect(\ppvect) D_{\variable}\polmap(\variable) } \mathrm{d}\variable.
\end{align*}
The associated gradient w.r.t. the map parameters $\curveparam$ is given component--wise by
\begin{align*}
\frac{\partial \Jval[\curveparam]}{\partial \curveparam_i} 
 &= \int_{\polmapspace} \frac{\partial}{\partial \curveparam_i} \left( \Indf \circ \left( \jvect \circ \polmap \right) \right) 
    \vol{ \TJP } \mathrm{d}\variable\\	
 &\quad{} + \int_{\polmapspace} \left( \Indf \circ \left( \jvect \circ \polmap \right) \right)   
       \vol{ \TJP }\transpose{\left(  \vecop\left( \transpose{\TJP}\TJP \right)^{-\texttt{T}}\right)}
       N_{\dmandim} \left( I_{\dmandim} \otimes \transpose{\TJP} \right)
%        \frac{\partial \vecop \left( \TJP \right)}{\partial \curveparam_i}
	D_{\curveparam_i} \TJP
       \mathrm{d}\variable
\end{align*}
where $\TJP = D_{\ppvect}\jvect(\ppvect) D_{\variable}\polmap(\variable)$, $\otimes$ is the Kronecker product, $N_{\dmandim} = \frac{1}{2} \left(I_{\dmandim^2} + K_{\dmandim \dmandim}\right)$ is a symmetric $(\dmandim^2 \times \dmandim^2)$ idempotent matrix with rank $\frac{1}{2} \dmandim (\dmandim + 1)$ and $K_{\dmandim \dmandim}$ is a permutation matrix~\cite{magnus1999matrix}.
Note that
\begin{align*}
%  \frac{\partial \vecop \TJP}{\partial \curveparam_i} 
 D_{\curveparam_i} \TJP
 &= \left( \transpose{D_{\variable}\polmap(\variable)} \otimes I_{\dobj} \right)
%  \HJ[]
 D_{\ppvect} \left( \JacobJ[] \right)
%  \frac{\partial \polmap(\variable)}{\partial \curveparam_i}
 D_{\curveparam_i} \polmap(\variable) 
 + \left( I_{\dmandim} \otimes D_{\ppvect}\jvect(\ppvect) \right) 
%  \frac{\partial \vecop D_{\variable}\polmap(\variable)}{\partial \curveparam_i} 
 D_{\curveparam_i} \left( D_{\variable}\polmap(\variable) \right)
\end{align*}
\end{theorem}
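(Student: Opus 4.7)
The plan is to split the statement into three independent computations: (i) the volume-form identity for the integral over $\frontier_{\curveparam}(\polmapspace)$, (ii) the differentiation under the integral sign with respect to $\curveparam_i$, and (iii) the product/chain rule expression for $D_{\curveparam_i}\TJP$. Step (i) is standard: since $\jvect\circ\polmap$ is a smooth parametrization of the manifold $\frontier_{\curveparam}(\polmapspace)$, the intrinsic volume form is $\vol{D(\jvect\circ\polmap)(\variable)}\mathrm{d}\variable=\sqrt{\determinant{\transpose{M}M}}\mathrm{d}\variable$ with $M=D(\jvect\circ\polmap)(\variable)$. Applying the chain rule to $M=D_{\ppvect}\jvect(\ppvect)\,D_{\variable}\polmap(\variable)=\TJP$ gives the displayed integrand. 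The only regularity needed is that $\polmap$ and $\jvect$ be $C^1$ and that $\TJP$ have full column rank on $\polmapspace$, so that the square-root is smooth.

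For step (ii) I would invoke Leibniz's rule to pull $\partial/\partial\curveparam_i$ inside the integral (this is legitimate under the mild integrability/dominated-convergence hypotheses implicit in the theorem), and then apply the product rule to the two factors $\Indf\circ\jvect\circ\polmap$ and $\vol{\TJP}$. The first contribution produces the first summand of the gradient directly. The nontrivial piece is differentiating $\vol{A}=\sqrt{\determinant{\transpose{A}A}}$ in the parameter $\curveparam_i$ with $A=\TJP$. Writing $M=\transpose{A}A$, Jacobi's formula yields $\partial_{\curveparam_i}\sqrt{\det M}=\tfrac{1}{2}\sqrt{\det M}\,\transpose{(\vecop\,M^{-T})}\,\partial_{\curveparam_i}\vecop\,M$. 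Now the key matrix identity (see Magnus--Neudecker) is
\begin{equation*}
\mathrm{d}\vecop(\transpose{A}A)=(I_{\dmandim^2}+K_{\dmandim\dmandim})\,(I_{\dmandim}\otimes\transpose{A})\,\mathrm{d}\vecop\,A=2\,N_{\dmandim}\,(I_{\dmandim}\otimes\transpose{A})\,\mathrm{d}\vecop\,A,
\end{equation*}
which, combined with $\mathrm{d}\vecop\,A=D_{\curveparam_i}\TJP$ evaluated as a column vector, delivers exactly the second summand of the gradient as stated.

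For step (iii), I would apply the product rule to $\TJP=D_{\ppvect}\jvect(\ppvect)\,D_{\variable}\polmap(\variable)$ remembering that $\ppvect=\polmap(\variable)$ depends on $\curveparam$. The first contribution comes from varying $D_{\ppvect}\jvect$ through $\ppvect$: using $\vecop(BC)=(\transpose{C}\otimes I)\vecop\,B$ with $B=\mathrm{d}D_{\ppvect}\jvect$ and $C=D_{\variable}\polmap$, and then the chain rule $\mathrm{d}\vecop\,D_{\ppvect}\jvect=D_{\ppvect}(D_{\ppvect}\jvect)\,D_{\curveparam_i}\polmap$, one obtains the first term. The second contribution keeps $D_{\ppvect}\jvect$ fixed and uses $\vecop(BC)=(I\otimes B)\vecop\,C$ with $B=D_{\ppvect}\jvect$, giving $(I_{\dmandim}\otimes D_{\ppvect}\jvect)\,D_{\curveparam_i}(D_{\variable}\polmap)$. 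Summing the two pieces reproduces the stated decomposition.

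The main obstacle is bookkeeping in step (ii): getting the correct Kronecker/$K_{\dmandim\dmandim}$/$N_{\dmandim}$ structure when vectorizing $\mathrm{d}(\transpose{A}A)$, and in particular justifying the symmetric form $2N_{\dmandim}(I_{\dmandim}\otimes\transpose{A})$ rather than an asymmetric alternative; this is what ultimately places $N_{\dmandim}$ between $\transpose{(\vecop\,M^{-T})}$ and $(I_{\dmandim}\otimes\transpose{\TJP})$ in the final expression. Once those matrix-calculus identities are invoked carefully, the remaining manipulations are routine product- and chain-rule applications, and the continuity of $\Indf$ together with smoothness of $\jvect,\polmap$ supply the regularity needed to interchange differentiation and integration.
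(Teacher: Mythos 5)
Your proposal is correct and follows essentially the same route as the paper's own proof: the volume‑form/chain‑rule identity for the first equation, Leibniz plus the product rule with the determinant derivative written via $\transpose{\left(\vecop\left(\transpose{\TJP}\TJP\right)^{-\texttt{T}}\right)}$ and the Magnus--Neudecker identity $\mathrm{d}\vecop(\transpose{A}A)=2N_{\dmandim}(I_{\dmandim}\otimes\transpose{A})\,\mathrm{d}\vecop A$ for the second, and the two Kronecker vec‑identities for $D_{\curveparam_i}\TJP$. The only cosmetic difference is that you phrase the determinant step through Jacobi's formula applied to $\sqrt{\det M}$, whereas the paper carries the factor $\frac{1}{2\,\vol{\TJP}}$ explicitly; these are the same computation.
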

\begin{proof}
The equation of the performance measure $\jvect(\rho)$ follows directly from the definition of volume integral of a manifold~\cite{munkres1997analysis} and the definition of function composition.
In the following we give a detailed derivation of the $i$--th component of the gradient.
Let $\TJP = D_{\ppvect}\jvect(\ppvect_{\variable}) D_{\variable}\polmap(\variable)$, then
\begin{align*}
\frac{\partial \Jval[\curveparam]}{\partial \curveparam_i} 
 &= \int_{\polmapspace} \frac{\partial}{\partial \curveparam_i} \left( \Indf \circ \left( \jvect \circ \polmap \right) \right) 
    \vol{ D_{\ppvect}\jvect(\ppvect_{\variable}) D_{\variable}\polmap(\variable) } \mathrm{d}\variable\\	
%  &\quad{} + \int_{\polmapspace} \left( \Indf \circ \left( \jvect \circ \polmap \right) \right) 
%       \frac{1}{2 \vol{ D_{\ppvect}\jvect(\ppvect_{\variable}) D_{\variable}\polmap(\variable) } }
%       \frac{\partial \determinant{ \transpose{D_{\variable}\polmap(\variable)} \transpose{D_{\ppvect}\jvect(\ppvect)} D_{\ppvect}\jvect(\ppvect_{\variable}) D_{\variable}\polmap(\variable)} }{\partial \curveparam_i } \mathrm{d}\variable,	
 &\quad{} + \int_{\polmapspace} \left( \Indf \circ \left( \jvect \circ \polmap \right) \right) 
      \frac{1}{2 \vol{ \TJP } }
      \frac{\partial \determinant{ \transpose{\TJP} \TJP }}{\partial \curveparam_i } \mathrm{d}\variable,
\end{align*}
where the pedix $\variable$ is used to denote the direct or indirect dependence on variable $\variable$. 
While the loss derivative and the determinant derivative can be respectively expanded as
\begin{equation*}
 \frac{\partial}{\partial \curveparam_i} \left( \Indf \circ \left( \jvect \circ \polmap \right) \right) 
 = D_{\jvect} \Indf(\jvect_{\variable}) \cdot D_{\ppvect} \jvect(\ppvect_{\variable}) \cdot D_{\curveparam_i} \polmap(\variable),
\end{equation*}
\begin{align*}
\underbrace{
\frac{\partial \determinant{\transpose{\TJP}\TJP}}{\partial \curveparam_i}
}_{1 \times 1}
 &= 
\underbrace{
   \frac{\partial \determinant{\transpose{\TJP}\TJP}}{\partial \transpose{(\vecop{\TJP})}}
}_{1 \times \dmandim^2}
\underbrace{
 \frac{\partial \vecop \transpose{\TJP}\TJP}{\partial \transpose{(\vecop{\TJP})}}
}_{\dmandim^2 \times \dobj \dmandim}
\underbrace{
 \frac{\partial \TJP}{\partial \curveparam_i}
}_{\dobj \dmandim \times 1}
\end{align*}
where
\begin{align*}
\frac{\partial \determinant{\transpose{\TJP}\TJP}}{\partial \transpose{(\vecop{\TJP})}}
 &= \determinant{\transpose{\TJP}\TJP} \transpose{\left(  \vecop\left( \transpose{\TJP}\TJP \right)^{-\texttt{T}}\right)} \\
\frac{\partial \transpose{\TJP}\TJP}{\partial \transpose{(\vecop{\TJP})}}
 &= 2 N_{\dmandim} \left( I_{\dmandim} \otimes \transpose{\TJP} \right) \\
\end{align*}
and $\otimes$ is the Kronecker product, $N_{\dmandim} = \frac{1}{2} \left(I_{\dmandim^2} + K_{\dmandim \dmandim}\right)$ is a symmetric $(\dmandim^2 \times \dmandim^2)$ idempotent matrix with rank $\frac{1}{2} \dmandim (\dmandim + 1)$ and $K_{\dmandim \dmandim}$ is a permutation matrix~\cite{magnus1999matrix}.

The last term to be expanded is $D_{\curveparam_i} \TJP := \frac{\partial \vecop \left( \TJP \right)}{\partial \curveparam_i}$. 
We star from a basic property of the differential
\begin{align*}
 d \left( D_{\ppvect}\jvect(\ppvect) D_{\variable}\polmap(\variable) \right) 
 &= d (D_{\ppvect}\jvect(\ppvect)) D_{\variable}\polmap(\variable) + D_{\ppvect}\jvect(\ppvect)\; d (D_{\variable}\polmap(\variable))\\
 \intertext{then, applying the vector operator,}
 d \vecop \left( D_{\ppvect}\jvect(\ppvect) D_{\variable}\polmap(\variable) \right)
 &= \vecop \left(d (D_{\ppvect}\jvect(\ppvect)) D_{\variable}\polmap(\variable) \right) + \vecop \left( D_{\ppvect}\jvect(\ppvect)\; d (D_{\variable}\polmap(\variable)) \right)\\
 &= 
 \underbrace{
 \left( \transpose{D_{\variable}\polmap(\variable)} \otimes I_{\dobj} \right)
 }_{\dmandim \dobj \times \dpolicy \dobj} 
 \underbrace{ 
 d \vecop (D_{\ppvect}\jvect(\ppvect))
 }_{\dpolicy \dobj \times 1}
 +
 \underbrace{ 
 \left( I_{\dmandim} \otimes D_{\ppvect}\jvect(\ppvect) \right)
 }_{\dmandim\dobj \times \dmandim\dpolicy}
 \underbrace{
 d\vecop(D_{\variable}\polmap(\variable))
 }_{\dmandim\dpolicy \times 1}
\end{align*}
Finally, the derivative is given by	
\begin{align*}
 D_{\curveparam_i} \TJP
 &= \left( \transpose{D_{\variable}\polmap(\variable)} \otimes I_{\dobj} \right)
 \underbrace{
 \frac{\partial \vecop D_{\ppvect}\jvect(\ppvect)}{\partial \transpose{\ppvect}}
 }_{\dpolicy \dobj \times \dpolicy}
 \underbrace{
 \frac{\partial \polmap(\variable)}{\partial \curveparam_i}
 }_{\dpolicy \times 1}
 + \left( I_{\dmandim} \otimes D_{\ppvect}\jvect(\ppvect) \right) 
 \underbrace{
 \frac{\partial \vecop D_{\variable}\polmap(\variable)}{\partial \curveparam_i}
 }_{\dmandim \dpolicy \times 1}\\
%  &= \color{red}\left( \transpose{D_{\variable}\polmap(\variable)} \otimes I_{\dobj} \right)
%  \frac{\partial}{\partial \transpose{\ppvect}}\transpose{\left(\frac{\partial \JacobJ[]}{\partial \transpose{\ppvect}}\right)}
%  \frac{\partial \polmap}{\partial \curveparam_i}
%  + \left( I_{\dmandim} \otimes D_{\ppvect}\jvect(\ppvect) \right) 
%  \frac{\partial D_{\variable}\polmap(\variable)}{\partial \curveparam_i} \text{ NO}\\
%  &= \color{red}\left( \transpose{D_{\variable}\polmap(\variable)} \otimes I_{\dobj} \right)
%  \HJ[]
%  \frac{\partial \polmap}{\partial \curveparam_i}
%  + \left( I_{\dmandim} \otimes D_{\ppvect}\jvect(\ppvect) \right) 
%  \frac{\partial D_{\variable}\polmap(\variable)}{\partial \curveparam_i} \text{ NO}\\
 &= \left( \transpose{D_{\variable}\polmap(\variable)} \otimes I_{\dobj} \right) 
 D_{\ppvect} \left( \JacobJ[]\right)
 D_{\curveparam_i} \polmap(\variable)
 + \left( I_{\dmandim} \otimes D_{\ppvect}\jvect(\ppvect) \right) 
 D_{\curveparam_i} \left( D_{\variable}\polmap(\variable)\right)\\
\end{align*}
Note that $D_{\ppvect} \left( \JacobJ[]\right) = \frac{\partial \vecop \JacobJ[]}{\partial \transpose{\ppvect}}$ do not denote the Hessian  matrix. In fact, the Hessian matrix is defined as the derivative of the transpose Jacobian, that is, $\HJ[] = D_{\ppvect} \transpose{ \left(  \JacobJ[] \right) }$. 
The following equation relates the Hessian matrix to $D_{\ppvect} \left( \JacobJ[]\right)$:
\begin{equation*}
 H^{m,n}_{\ppvect} J_i = D^{2[n,m]}_{\ppvect} \jvect_i(\ppvect) = \frac{\partial}{\partial \ppvect_n} \left( \frac{\partial \jvect_i}{\partial \ppvect_m} \right) = D_{\ppvect}^{p,n} \left( \JacobJ[] \right)
\end{equation*}
where $p = i + \dobj (m-1)$, where $\dobj$ is the number of rows of the Jacobian matrix.
\end{proof}

\begin{theorem}[2]
For any MOMDP, the Hessian $\HJ[]$  of the expected discounted reward $\jvect$  w.r.t. the policy parameters $\ppvect$ is  a  ($\dobj \dpolicy \times \dpolicy$) matrix obtained by stacking the Hessian of each component
\begin{equation*}
 \HJ[] = \frac{\partial}{\partial \transpose{\ppvect}} \vecop \transpose{\left(\frac{\partial \jvect_i(\ppvect)}{\partial \transpose{\ppvect}}\right)}
 =\begin{bmatrix}
          \HJ[1]\\
          \vdots\\
          \HJ[q]
         \end{bmatrix},
\end{equation*}
where
 \begin{align}
 \label{E:hessiantraj}
 \HJ[i] 
 &= \int_{\trajset} \pdfunc{\traj|\ppvect} \mathbf{r}_i(\traj) \left(
%  \transpose{ \left( \frac{\partial}{\partial \transpose{\ppvect}}\log \pdfunc{\traj|\ppvect} \right)}
%  \frac{\partial}{\partial \transpose{\ppvect}}\log \pdfunc{\traj|\ppvect}
 \nabla_{\ppvect} \log \pdfunc{\traj|\ppvect}
 \transpose{ \nabla_{\ppvect} \log \pdfunc{\traj|\ppvect}}
 + 
%  \frac{\partial}{\partial \transpose{\ppvect}}\transpose{\left(\frac{\partial \log \pdfunc{\traj|\ppvect}}{\partial \transpose{\ppvect}}\right)}
 D_{\ppvect} \left( \nabla_{\ppvect} \log \pdfunc{\traj|\ppvect} \right)
 \right) \mathrm{d}\traj.
%  & = \int_{\trajset} \pdfunc{\traj|\ppvect} \mathbf{r}_i(\traj) 
%   \Bigg[
%   \transpose{ \left( \sum_{k=1}^{\trajlength} \frac{\partial}{\partial \transpose{\ppvect}} \log \pi(a_k|s_k,\ppvect)\right) }
%   \sum_{k=1}^{\trajlength} \frac{\partial}{\partial \transpose{\ppvect}} \log \pi(a_k|s_k,\ppvect)\\
%  & \hspace{7cm} + \sum_{k=1}^{\trajlength} \frac{\partial}{\partial \ppvect} \transpose{\left(\frac{\partial}{\partial \transpose{\ppvect}} \log \pi(a_k|s_k,\ppvect)\right)}
%   \Bigg]\mathrm{d}\traj.
\end{align}
% and 
% $$ \frac{\partial}{\partial \transpose{\ppvect}}\log \pdfunc{\traj|\ppvect} = \sum_{k=1}^{\trajlength} \frac{\partial}{\partial \transpose{\ppvect}} \log \pi(a_k|s_k,\ppvect)$$
\end{theorem}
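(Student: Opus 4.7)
The plan is to derive the Hessian by twice differentiating under the integral in the trajectory-based expression $\jvect_i(\ppvect) = \int_{\trajset} \pdfunc{\traj|\ppvect}\,\mathbf{r}_i(\traj)\,\mathrm{d}\traj$, using the log-derivative (score function) identity as the workhorse. This is essentially a second-order version of the classical REINFORCE derivation. As a first reduction, the stacking structure claimed for $\HJ[]$ follows directly from the definition applied component-wise, so it is enough to establish formula~(\ref{E:hessiantraj}) for a single scalar component $\jvect_i$.

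Next, I would recall the one-step gradient computation: differentiating under the integral and applying $\nabla_{\ppvect}\pdfunc{\traj|\ppvect} = \pdfunc{\traj|\ppvect}\,\nabla_{\ppvect}\log\pdfunc{\traj|\ppvect}$ yields
\begin{equation*}
\nabla_{\ppvect} \jvect_i(\ppvect) = \int_{\trajset}\pdfunc{\traj|\ppvect}\,\mathbf{r}_i(\traj)\,\nabla_{\ppvect}\log\pdfunc{\traj|\ppvect}\,\mathrm{d}\traj.
\end{equation*}
For the Hessian I differentiate this integrand once more, now with respect to $\transpose{\ppvect}$. Writing $p := \pdfunc{\traj|\ppvect}$ and $v := \nabla_{\ppvect}\log p$, the product rule gives
\begin{equation*}
\frac{\partial (p\,v)}{\partial \transpose{\ppvect}} = v\,\frac{\partial p}{\partial \transpose{\ppvect}} + p\,\frac{\partial v}{\partial \transpose{\ppvect}} = p\,v\,\transpose{v} + p\,D_{\ppvect} v,
\end{equation*}
where the first equality reuses the log-trick in the form $\frac{\partial p}{\partial \transpose{\ppvect}} = p\,\transpose{v}$. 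Multiplying by $\mathbf{r}_i(\traj)$ and integrating reproduces the two terms of~(\ref{E:hessiantraj}): the outer-product score term $\nabla_{\ppvect}\log p\,\transpose{(\nabla_{\ppvect}\log p)}$ and the Jacobian-of-score term $D_{\ppvect}(\nabla_{\ppvect}\log p)$. Dimensions check out: both $v\transpose{v}$ and $D_{\ppvect} v$ are $(\dpolicy \times \dpolicy)$, matching the shape of each block $\HJ[i]$.

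The only genuinely subtle obstacle is justifying the two interchanges of differentiation and the trajectory integral. I would invoke standard dominated-convergence hypotheses: a smooth stochastic policy whose $\log\pdfunc{\cdot|\ppvect}$ has integrable first and second derivatives, together with boundedness/integrability of $\mathbf{r}_i(\traj)$ under $\pdfunc{\traj|\ppvect}$, which is automatic for finite-horizon or discounted MOMDPs with bounded per-step rewards. With those regularity conditions in place, the rest of the argument is bookkeeping with the score-function identity and the vector product rule, and the claimed block structure of $\HJ[]$ then comes from reassembling the per-component identities.
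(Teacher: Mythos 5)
Your derivation matches the paper's own proof: both start from the score-function form of $\nabla_{\ppvect}\jvect_i(\ppvect)$, differentiate once more under the integral, and apply the log-trick together with the product rule to obtain the outer-product term plus the Jacobian-of-score term. The only differences are cosmetic — the paper works with differentials and additionally notes that $\log\pdfunc{\traj|\ppvect}$ decomposes into a sum of per-step policy terms, while you instead spell out the regularity conditions for interchanging differentiation and integration.
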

\begin{proof}
The Hessian equation follows form the definition of the gradient $\nabla_{\ppvect} \jvect(\ppvect)$
\begin{equation*}
 \nabla_{\ppvect} \jvect_i(\ppvect) = \int_{\trajset} \pdfunc{\traj|\ppvect} \mathbf{r}_i(\traj) \nabla_{\ppvect} \log \pdfunc{\traj|\ppvect} \mathrm{d}\traj,
\end{equation*} 
the log trick and the property that the reward of a trajectory $\traj$ is independent from the policy parametrization.
Let outline the derivation of the Hessian matrix.
\begin{align*}
 \mathrm{d} \nabla_{\ppvect} \jvect_i(\ppvect) = \int_{\trajset}  &\mathbf{r}_i(\traj) \nabla_{\ppvect} \log \pdfunc{\traj|\ppvect}  \mathrm{d} \pdfunc{\traj|\ppvect} 
  +\mathbf{r}_i(\traj) \pdfunc{\traj|\ppvect}  \mathrm{d} \left(\nabla_{\ppvect} \log \pdfunc{\traj|\ppvect} \right) \mathrm{d}\traj,
\end{align*}
Then
\begin{align*}
D_{\ppvect} \left(\nabla_{\ppvect} \jvect_i(\ppvect) \right) &= D_{\ppvect} \left( D_{\ppvect} \jvect_i(\ppvect) \right) = H \jvect_i(\ppvect) \\
 &=
 \int_{\trajset}  \mathbf{r}_i(\traj) \pdfunc{\traj|\ppvect} 
 \left[ 
 \nabla_{\ppvect} \log \pdfunc{\traj|\ppvect}  \transpose{ \left( \nabla_{\ppvect} \log \pdfunc{\traj|\ppvect} \right) }
  +  D_{\ppvect} \left( \nabla_{\ppvect} \log \pdfunc{\traj|\ppvect} \right)
 \right] \mathrm{d}\traj\\
 &=
 \int_{\trajset}  \mathbf{r}_i(\traj) \pdfunc{\traj|\ppvect} 
 \left[ 
 \nabla_{\ppvect} \log \pdfunc{\traj|\ppvect}  \transpose{ \left( \nabla_{\ppvect} \log \pdfunc{\traj|\ppvect} \right) }
  +  H \log \pdfunc{\traj|\ppvect} 
 \right] \mathrm{d}\traj.
\end{align*}

Recall that, since the probability of trajectory $\traj$ under policy $\pi^{\ppvect}$ is given by
\begin{equation*}
\pdfunc{\traj|\ppvect} = \pdfunc{s_1}\prod_{k=1}^{\trajlength} \pmodel(s_{k+1}|s_k,a_k)\pi(a_k|s_k,\ppvect),
\end{equation*}
the following equations hold
\begin{align*}
 \nabla_{\ppvect} \log \pdfunc{\traj|\ppvect} &= \sum_{k=1}^{\trajlength} \nabla_{\ppvect} \log \pi(s_k|s_k,\ppvect),\\
 H_{\ppvect} \log \pdfunc{\traj|\ppvect} &= \sum_{k=1}^{\trajlength} H_{\ppvect} \log \pi(s_k|s_k,\ppvect).
\end{align*}

\end{proof}
\begin{lemma}[4]
Given a parametrized policy $\pi(a|s,\ppvect)$, under the assumption Assumption 3, the $i$--th component of the log--Hessian of the expected return can be bounded by
\begin{equation*}
%  \Big | H^{(mn)} \jvect_i(\ppvect) \Big | 
 \norm[\max]{\HJ[i]}\leq \frac{\Rbound_i \trajlength \gamma^{\trajlength}}{1-\gamma} \left( \trajlength \Dbound^2 + \Hbound \right).
\end{equation*}
\end{lemma}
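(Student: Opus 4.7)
The plan is to start from the integral representation of $\HJ[i]$ provided by Theorem~2, pull the max-norm inside the integral via the triangle inequality, and then bound the remaining integrand uniformly in $\traj$. Because $\pdfunc{\traj|\ppvect}$ is non-negative and integrates to one, this reduces the lemma to bounding the product $|\mathbf{r}_i(\traj)| \cdot M(\traj)$ uniformly in $\traj$, where
\begin{equation*}
 M(\traj) = \norm[\max]{\nabla_{\ppvect}\log\pdfunc{\traj|\ppvect}\transpose{\nabla_{\ppvect}\log\pdfunc{\traj|\ppvect}} + \Hessian_{\ppvect}\log\pdfunc{\traj|\ppvect}}.
\end{equation*}
The return factor $|\mathbf{r}_i(\traj)|$ and the curvature factor $M(\traj)$ can then be handled independently.

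The second step is to control $M(\traj)$ by exploiting the additive trajectory decompositions recorded at the end of the proof of Theorem~2, namely $\nabla_{\ppvect}\log\pdfunc{\traj|\ppvect} = \sum_{k=1}^{\trajlength}\nabla_{\ppvect}\log\pi(a_k|s_k,\ppvect)$ and the analogous identity for $\Hessian_{\ppvect}\log\pdfunc{\traj|\ppvect}$. Assumption~3, which bounds each per-step score and its derivative in max-norm by $\Dbound$ and $\Hbound$ respectively, combined with the triangle inequality, yields $\norm[\max]{\nabla_{\ppvect}\log\pdfunc{\traj|\ppvect}} \leq \trajlength\Dbound$ and $\norm[\max]{\Hessian_{\ppvect}\log\pdfunc{\traj|\ppvect}} \leq \trajlength\Hbound$. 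The rank-one outer product then contributes at most $\trajlength^2\Dbound^2$ in max-norm, so $M(\traj) \leq \trajlength(\trajlength\Dbound^2 + \Hbound)$ uniformly in $\traj$.

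The final step bounds the cumulative discounted return $|\mathbf{r}_i(\traj)|$ using $|r_i(s,a)|\leq \Rbound_i$ and summing the geometric discount weights over the trajectory horizon, producing the prefactor $\Rbound_i \gamma^{\trajlength}/(1-\gamma)$ that appears in the statement. Multiplying the two uniform bounds together and integrating against $\pdfunc{\traj|\ppvect}$ (which contributes a factor of one) produces the claimed inequality. The main delicate point will be in this last step: keeping the discount factors aligned so that the stated form $\gamma^{\trajlength}/(1-\gamma)$ arises, rather than the looser $1/(1-\gamma)$ that a naive bound on the finite geometric sum would give; this hinges on the precise convention used for the trajectory return and on the compatible definition of $\Rbound_i$ underlying Assumption~3.
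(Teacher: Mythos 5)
Your proposal follows essentially the same route as the paper: bound the Hessian entries under the integral using the trajectory-level additive decompositions of the score and log-Hessian, apply the per-step bounds $\Dbound$ and $\Hbound$ from Assumption~3 to get the factor $\trajlength(\trajlength\Dbound^2+\Hbound)$, and bound the discounted return by $\Rbound_i$ times a geometric sum. The ``delicate point'' you flag is real but is not resolved in the paper either --- its own derivation writes $\Rbound_i\sum_{l=1}^{\trajlength}\gamma^{l-1}$ and then equates it to $\Rbound_i\trajlength\gamma^{\trajlength}/(1-\gamma)$, which does not follow from the geometric sum (one would expect $(1-\gamma^{\trajlength})/(1-\gamma)$), so your proof is as complete as the original on this score.
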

\begin{proof}
Consider the definition of the Hessian in Equation~\eqref{E:hessiantraj}. Under assumption 3, the Hessian components can be bounded by ($\forall m,n$)
\begin{align*}
 \label{E:hessiantraj}
 \Big | H^{m,n}_{\ppvect} \jvect_i(\ppvect) \Big |
%  & = \int_{\trajset} \pdfunc{\traj|\ppvect} \mathbf{r}_i(\traj) 
%   \Bigg[
%   \sum_{k=1}^{H} \frac{\partial}{\partial \ppvect_m} \log \pi(a_k|s_k,\ppvect)
%   \sum_{k=1}^{H} \frac{\partial}{\partial \ppvect_n} \log \pi(a_k|s_k,\ppvect)\\
%  & \hspace{7cm} + \sum_{k=1}^{H} \frac{\partial^2}{\partial \ppvect_m \partial \ppvect_n}  \log \pi(a_k|s_k,\ppvect)
%   \Bigg]\\
 & = \Bigg | \int_{\trajset} \pdfunc{\traj|\ppvect} \mathbf{r}_i(\traj) 
  \sum_{k=1}^{H}\Bigg(
  \frac{\partial}{\partial \ppvect_m} \log \pi(a_k|s_k,\ppvect)
  \sum_{k'=1}^{H} \frac{\partial}{\partial \ppvect_n} \log \pi(a_{k'}|s_{k'},\ppvect)\\
 & \hspace{7cm} + \frac{\partial^2}{\partial \ppvect_m \partial \ppvect_n}  \log \pi(a_k|s_k,\ppvect)
  \Bigg)\ \Bigg |\\
 & \leq \Rbound_i \sum_{l=1}^{\trajlength} \gamma^{l-1} \cdot
 \sum_{k=1}^{\trajlength} \left(
 \Dbound \sum_{k'=1}^{\trajlength} \Dbound + \Hbound 
 \right)
 = \frac{\Rbound_i \trajlength \gamma^{\trajlength}}{1-\gamma} \left( \trajlength \Dbound^2 + \Hbound \right)
\end{align*}
\end{proof}

\begin{theorem}[5]
Given a parametrized policy $\pi(a|s,\ppvect)$, under the assumption Assumption 3, using the following number of $H$--step trajectories
\begin{equation*}
  N=\frac{1}{2\epsilon_i^2}\left(\frac{\Rbound_i \trajlength \gamma^{\trajlength}}{\left(1-\gamma\right)}
  \left( \trajlength \Dbound^2 + \Hbound \right)\right)^2 \log \frac{2}{\delta}
\end{equation*}
the gradient estimate $\HJhat[i]$ generated by Equation
% ~\eqref{E:HessianEstimateRF}
(1) is such that with probability $1-\delta$:
% $$\left| \widehat{H}^{(mn)}\jvect_i(\ppvect) - H^{(mn)}\jvect_i(\ppvect) \right| \leq \epsilon_i.$$
$$\norm[\max]{\HJhat[i] - \HJ[i]} \leq \epsilon_i.$$
\end{theorem}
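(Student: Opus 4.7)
The plan is to view $\HJhat[i]$ as a Monte Carlo sample average over $N$ independent trajectories and obtain the bound via Hoeffding's inequality applied entry-by-entry, re-using the per-trajectory bound implicit in the proof of Lemma~4.

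I would start by writing the estimator from Equation~(1) as $\HJhat[i] = \frac{1}{N}\sum_{j=1}^{N} \mathbf{X}_j$, where each $\mathbf{X}_j$ is the Hessian integrand evaluated on an independent sample trajectory $\traj_j \sim \pdfunc{\cdot|\ppvect}$. By Theorem~2 each $\mathbf{X}_j$ is unbiased, so $\EV[\mathbf{X}_j] = \HJ[i]$. I would then revisit the derivation of Lemma~4 and observe that the constant $B_i := \frac{\Rbound_i \trajlength \gamma^{\trajlength}}{1-\gamma}\bigl(\trajlength \Dbound^2 + \Hbound\bigr)$ is obtained by bounding the integrand pointwise \emph{before} taking the expectation. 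Consequently, for every trajectory and every index pair $(m,n)$, the corresponding entry of the per-trajectory sample $\mathbf{X}_j$ lies almost surely in $[-B_i, B_i]$. This boundedness is precisely the hypothesis needed to invoke a Hoeffding-type tail bound.

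Fix an index pair $(m,n)$, and let $\hat h_{m,n}$ and $h_{m,n}$ denote the corresponding entries of $\HJhat[i]$ and $\HJ[i]$ respectively. Since $\hat h_{m,n}$ is an average of $N$ i.i.d.\ bounded random variables with mean $h_{m,n}$, Hoeffding's inequality gives
\begin{equation*}
\Pr\!\left(\bigl|\hat h_{m,n} - h_{m,n}\bigr| > \epsilon_i\right) \leq 2\exp\!\left(-\frac{N \epsilon_i^2}{2 B_i^2}\right).
\end{equation*}
Requiring the right-hand side to be at most $\delta$ and solving for $N$ reproduces, up to a benign constant absorbed into the range convention, the sample complexity stated in the theorem.

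The main step requiring care is the passage from this entrywise guarantee to the max-norm statement $\norm[\max]{\HJhat[i]-\HJ[i]} \leq \epsilon_i$: a fully rigorous argument would take a union bound over the $\dpolicy^2$ matrix entries and thus introduce an extra $\log \dpolicy$ factor in $N$, which the theorem does not carry. I would therefore present the bound in the same entrywise form as the statement, noting that it is to be read as holding either for a single (fixed) component with probability $1-\delta$, or equivalently for the max-norm after rescaling the failure probability, in line with the convention adopted in the paper.
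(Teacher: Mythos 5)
Your proof follows essentially the same route as the paper's: apply Hoeffding's inequality entrywise, using the pointwise (pre-expectation) bound from the derivation of Lemma~4 as the almost-sure range of each per-trajectory sample, and solve for $N$. Your remark that the stated max-norm guarantee strictly requires a union bound over the $\dpolicy^2$ entries (and hence an extra $\log \dpolicy$ factor) identifies a looseness that the paper's own proof silently shares, so the proposal is, if anything, slightly more careful than the original.
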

\begin{proof}
 Hoeffding's inequality implies that, $\forall m,n$
 \begin{equation*}
  \mathbb{P} \left( \widehat{H}^{m,n}_{\ppvect}\jvect_i(\ppvect) - H^{m,n}_{\ppvect}\jvect_i(\ppvect) \geq \epsilon_i \right)
  \leq 2e^{-\frac{N^2 \epsilon_i^2}{\sum_{i=1}^{N} (b_i-a_i)^2}} = \delta
 \end{equation*}
 Solving the equation for $N$, notice that Lemma 4 provides a bound on each samples, we obtain:
 \begin{align*}
  N=\frac{1}{2\epsilon_i^2}\left(\frac{\Rbound_i \trajlength \gamma^{\trajlength}}{\left(1-\gamma\right)}
  \left( \trajlength \Dbound^2 + \Hbound \right)\right)^2 \log \frac{2}{\delta}.
 \end{align*}

\end{proof}
\section{Experiments}
In this section we present the most relevant experiments conducted on two domains (a Linear-Quadratic Gaussian regulator and a water reservoir) in order to study the behavior of \PMGA[] algorithm with the different loss functions $\Indf$ proposed in the paper. We show the frontiers obtained with the loss functions described in the paper and in addition we present a normalization that takes into account the area of the approximate Pareto frontier. The area $A(\curveparam)$ of a manifold is defined as the volume integral of the unitary function:
$$A(\curveparam) = \int_{\frontier\left(\polmapspace\right)} 1\cdot \mathrm{d}V.$$

In the following we propose two different type of normalization:
\begin{itemize}
\item Using the area $A(\curveparam)$ of the frontier $\frontier(\curveparam)$ as normalization factor: $\Indf_n = \Indf A(\curveparam)^{-\beta}$,
\item Using a convex combination of both the area and the loss function: $\Indf_n = w_1\Indf+w_2A(\curveparam)$ with $w_1+w_2 = 1$. 
\end{itemize}
The idea of these normalizations is that the loss function $\Indf$ should guarantee the accuracy of the solutions obtained (i.e., only non-dominated solutions), while the area $A(\curveparam)$ should provide a complete and uniform covering of the frontier.

In all the following experiments the learning rate was hand-tuned.

\begin{figure}[t]
\centering
\subfigure[\label{F:X}]{
 \includegraphics{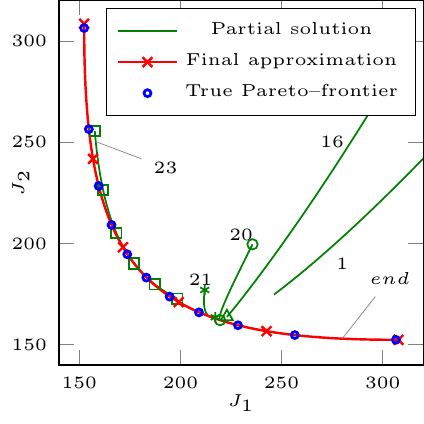}
}
\subfigure[\label{F:Y}]{
 \includegraphics{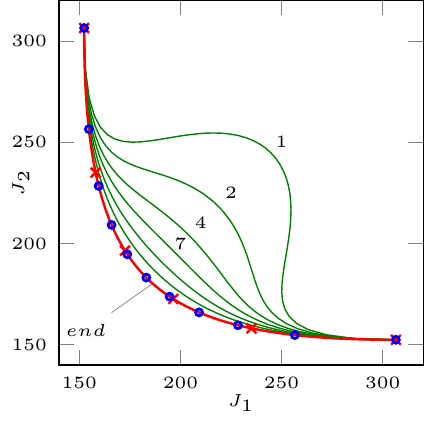}
}
\subfigure[\label{F:Z}]{
 \includegraphics{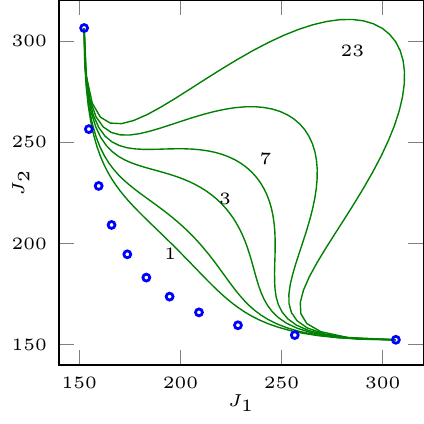}
}
\caption{Learning processes for the 2-objectives LQG (numbers denote the iteration) obtained through \PMGA[]. In Figures \ref{F:Y} and \ref{F:X} $\Indf_3$ is used, respectively with and without forcing the parametrization to pass through extrema. Figure \ref{F:Z} shows iterations with $\Indf_1(\jvect,\mathbf{p}_{au})$.}
\end{figure}

\begin{figure}[t]
\centering
\subfigure[\label{F:X-Jr}Using $\Indf_3$ the learning converges.]{
 \includegraphics{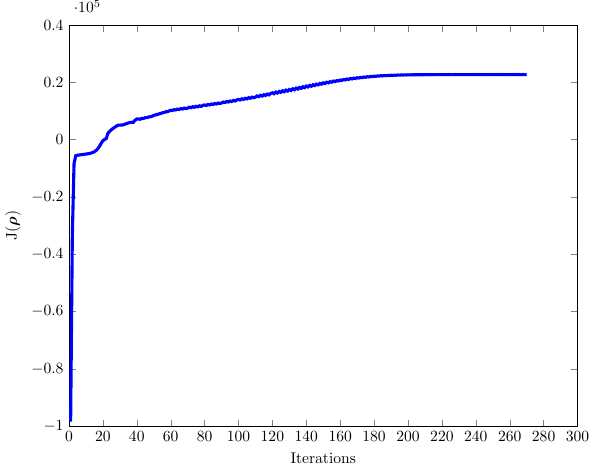}
}
\subfigure[\label{F:Z-Jr}Using $\Indf_1$ the learning diverges.]{
 \includegraphics{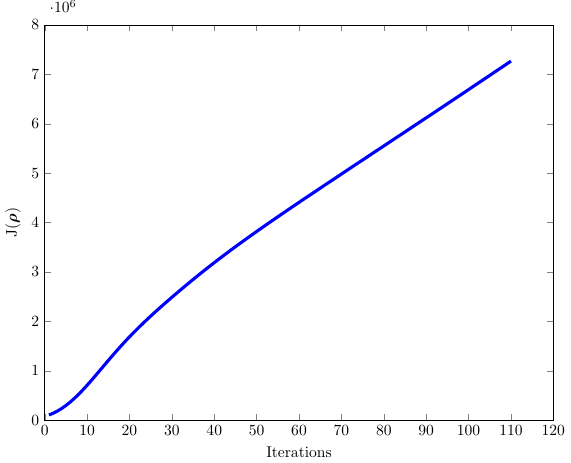}
}
\caption{$J(\curveparam)$ trends with different loss functions for the 2-objectives LQG.}
\end{figure}

\subsection{Linear-Quadratic Gaussian regulator}
The first case of study is a discrete-time Linear-Quadratic Gaussian regulator (LQG) with multidimensional and continuous state and action spaces~\cite{Peters2008reinf}. The LQG problem is defined by the following dynamics
\begin{align*}
s_{t+1} = A& s_t+B a_t,\quad
a_t \sim \mathcal{N}\left(K\cdot s_t, \Sigma\right)\\
&r_t = -\transpose{s_t}Qs_t - \transpose{a_t} R a_t
\end{align*}
where $s_t$ and $a_t$ are $n$-dimensional column vector ($\dstate=\daction$), $A,B,Q,R \in \realspace^{\dstate \times \dstate}$, $Q$ is a symmetric semidefinite matrix and $R$ is a symmetric positive definite matrix.
Dynamics are not coupled, that is, $A$ and $B$ are identity matrices.
The policy is Gaussian with parameters $\ppvect = vec(K)$, where $K \in \realspace^{\dstate \times \dstate}$. Finally, a constant covariance matrix $\Sigma = I$ has been chosen.

The LQG can be easily extended to account for multi-conflicting objectives. In particular, the problem of minimizing the distance from the origin w.r.t. the $i$-th axis has been taken into account, considering the cost of the action over the other axes
\begin{equation*}
\rmodel_i\left(s,a,s'\right) = -s_i^2 - \sum_{i\neq j} a_j^2.
\end{equation*}
Since the maximization of the $i$-th objective requires to have null action on the other axes, objectives are conflicting.\\
As this reward formulation violates the positiveness of matrix $R_i$, we change the reward adding an $\xi$-perturbation
\begin{equation*}
 \rmodel_i(s,a,s') = -(1-\xi)\left(s_i^2 + \sum_{i\neq j} a_j^2\right) -\xi \left(\sum_{j\neq i} s_j^2 + a_i\right),
\end{equation*}
where $\xi$ is sufficiently small. 

The values of the parameters used for all the experiments are the following ones: $\gamma = 0.9, \Sigma = I, \xi = 0.1$ and the initial state $s_0 = \transpose{[10,10]}$.

\begin{figure}[t]
\centering
\subfigure[\label{F:A_1}]{
 \includegraphics{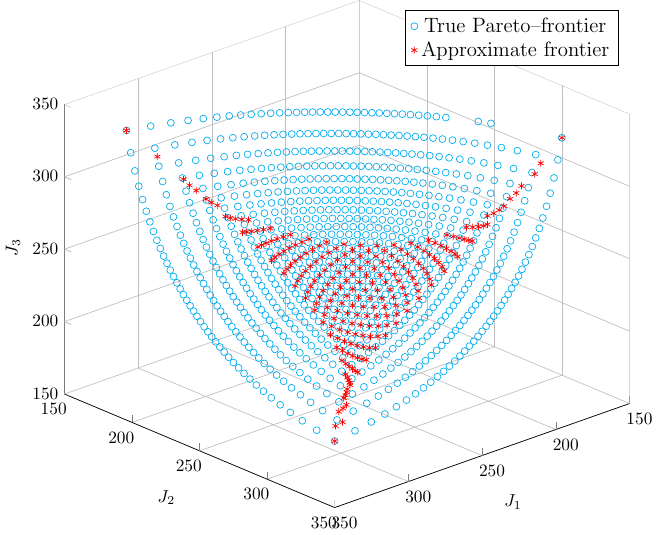}
}
\subfigure[\label{F:A_2}]{
 \includegraphics{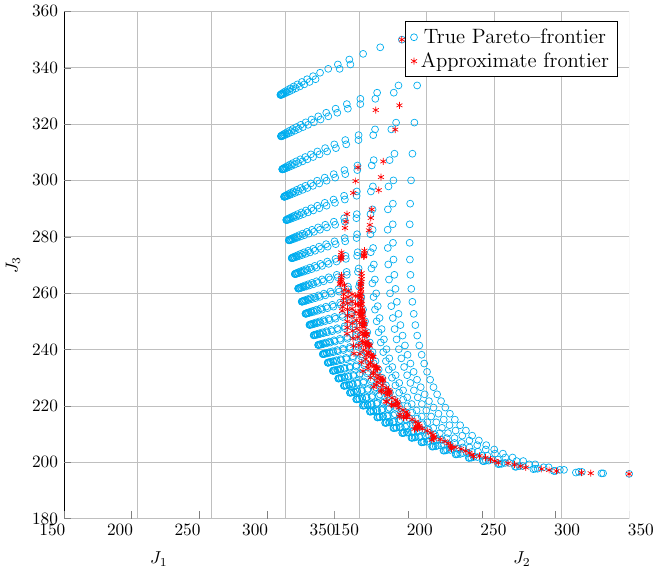}
}
\caption{Different views of the frontier obtained by \PMGA[] using $\Indf_2$ and $\Indf_1(\jvect,\mathbf{p}_u)$ for the 3-objective LGQ.}
\label{F:A}
\end{figure}

\begin{figure}[t]
\centering
\subfigure[\label{F:B_1}]{
 \includegraphics{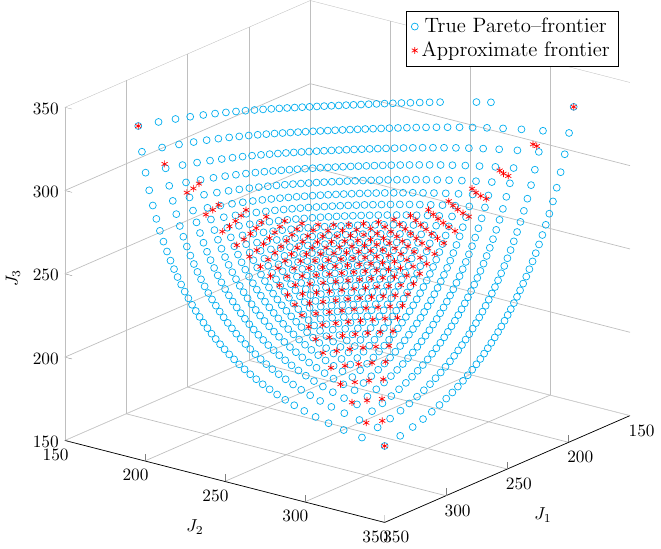}
}
\subfigure[\label{F:B_2}]{
 \includegraphics{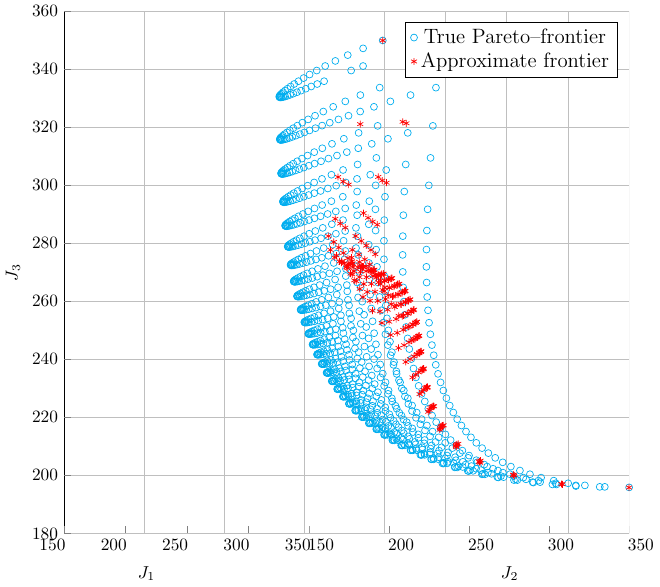}
}
\caption{Different views of the frontier obtained by \PMGA[] using normalized $\Indf_1(\jvect,\mathbf{p}_u)$ for the 3-objective LQG.}
\label{F:B}
\end{figure}

\subsubsection{2-objectives case results}
We first present the results obtained using \PMGA[] algorithm and a parametrization that is not forced to pass through the extrema of the frontier. It is the one presented in the paper and it only limits $\theta_i$ in the interval $[-1,0]$:
\begin{align*}
\theta_1 &= (1+\exp(\rho_1+\rho_2t))^{-1}
\\
\theta_2 &= (1+\exp(\rho_3+\rho_4t))^{-1}
\end{align*}

In this case using $\Indf_1$ and $\Indf_2$ the algorithm was not able to learn a good approximation of the Pareto--frontier in terms of accuracy and covering. Using utopia point as reference point for $\Indf_1$ (i.e., $\Indf_1(\jvect,\mathbf{p}_u)$) the frontier learned collapses in one point on the knee of the front. The same behaviour occurs using $\Indf_2$. Using antiutopia point as reference point for $\Indf_1$ (i.e., $\Indf_1(\jvect,\mathbf{p}_{au})$) the solutions returned are dominated and the frontier gets wider and tends to diverge from the true frontier expanding on the opposite half space (Figure \ref{F:Z-Jr} shows the divergent trend of $J(\curveparam)$). These behaviours are not unexpected, considering the definition of the loss functions, as explained in the section of the paper devoted to metrics.

The only loss function able to learn with this parametrization was $\Indf_3$. Figure \ref{F:X} (presented in the paper) shows a few iterations of the learning process using $\lambda = 2.5$ and starting from $\curveparam_0 = \transpose{[1\:2\:0\:3]}$ (the algorithm was also able to learn starting from different $\curveparam_0$). Figure \ref{F:X-Jr} shows the indicator $J(\curveparam)$ as function of the iterations. It is possible to notice that it converges to a constant value.

Other experiments were conducted using a different parametrization, forced \PMGA[] approximation to pass through the extrema of the frontier: 
\begin{align*}
\theta_1 &= (0.2403-\rho_2t^2+(0.6588+\rho_1)t)^{-1}
\\
\theta_2 &= (0.8991-\rho_2t^2+(-0.6588+\rho_2)t)^{-1}
\end{align*}

In this case, besides $\Indf_3$, also $\Indf_2$ proved to be an effective loss function and they both returned an accurate and wide approximation of the Pareto frontier (Figure \ref{F:Y}, also presented in the paper, shows the learning process starting from $\curveparam_0 = \transpose{[2\:2]}$).

$\Indf_2(\jvect,\mathbf{p}_{au})$ has still the same behaviour discussed before and the approximate frontier diverges from the true one (Figure \ref{F:Z}). This problem can be solved using the first normalization with $\beta = 0.9$ (lower $\beta$ are not enough to correct the behaviour of the loss function, while using higher $\beta$ the frontier returned is shorter and tends to be a line between the extreme points). 

$\Indf_1(\jvect,\mathbf{p}_u)$ has a similar behaviour, as the algorithm returns almost a line between the extreme points in order to reduce the frontier length. Using the first normalization with $\beta = -1.8$ such behaviour disappears and the frontier obtained has accurate solutions and guarantees a complete covering of the true Pareto frontier (its performance are thesame as $\Indf_2$ and $\Indf_3$).

The second normalization, instead, has a critical problem because of the different magnitudo between the loss function and the area of the frontier, and therefore is difficult to properly choose $\mathbf{w}$. A solution could be to ignore the constraint $w_1+w_2=1$, but in the 2-objectives case there is such a difference in the magnutudo that we were not able to find a suitable $\mathbf{w}$.

\begin{figure}[t]
\centering
\subfigure[\label{F:C}Without normalization.]{
 \includegraphics{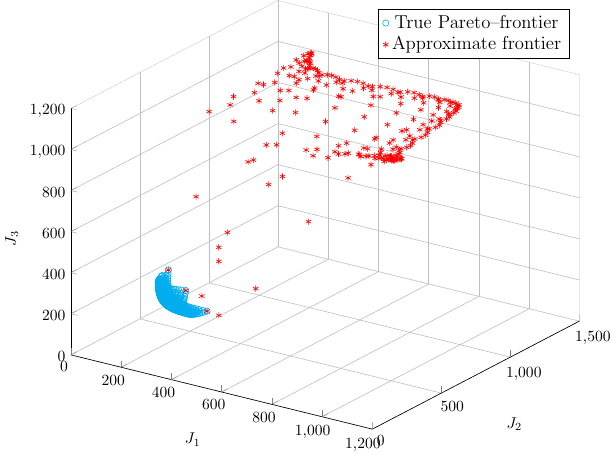}
}
\subfigure[\label{F:D}With normalization.]{
 \includegraphics{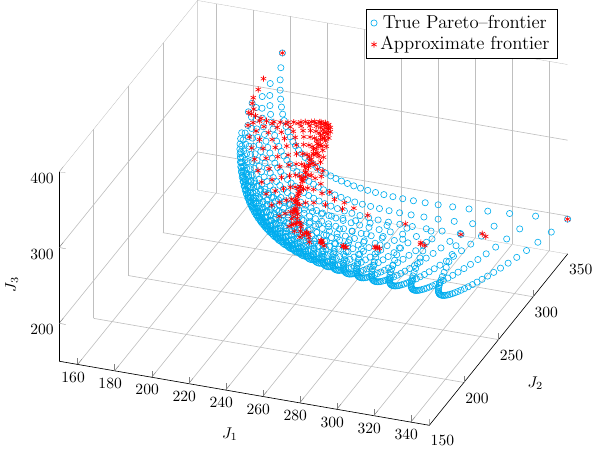}
}
\caption{Approximations of the Pareto frontier obtained by \PMGA[] using $\Indf_1(\jvect,\mathbf{p}_{au})$ for the 3-objective LQG.}
\end{figure}

\subsubsection{3-objectives case results}
We used a parametrization forced to pass through the extrema of the frontier and that limits $\theta_i$ in the interval $[-1,0]$:
\begin{align*}
\theta_1 &= -(1+\exp(a+\rho_1t_1-(b-\rho_2)t_2-\rho_1t_1^2-\rho_2t_2^2-\rho_3t_2t_1))^{-1}
\\
\theta_2 &= -(1+\exp(a-(b-\rho_4)t_1+\rho_5t_2-\rho_4t_1^2-\rho_5t_2^2-\rho_6t_1t_2))^{-1}
\\
\theta_3 &= -(1+\exp(-c+(\rho_7+b)t_1+(\rho_8+b)t_2-\rho_7t_1^2-\rho_8t_2^2-\rho_9t_1t_2))^{-1}
\end{align*}
where
\begin{equation*}
a = 1.151035476 \qquad b = 3.338299811 \qquad c = 2.187264336 \qquad \mathbf{t} \in simplex([0,1])
\end{equation*}
The initial $\curveparam_0$ is set to $\mathbf{0}$.

Figure \ref{F:A} shows the frontiers obtained using $\Indf_1(\jvect,\mathbf{p}_u)$, with and without normalization. We can clearly see that solutions tend to concentrate to the center of the frontier, in order to minimize the distance from the utopia point and the area of the frontier. Normalization is not able to correct this behaviour and the only result is to bump the frontier, slightly increasing its area. This effect seems to be indipendent from the normalization used and from the parameters $\mathbf{w}$ and $\beta$ (we tried with $1 < \beta < 6$ and 10 convex combinations uniformly spaced of $\mathbf{w}$).

Loss function $\Indf_2$ has the same behaviour and the frontiers obtained were very similar.

Figures \ref{F:C} and \ref{F:D} show the frontier obtained with $\Indf_1(\jvect,\mathbf{p}_{au})$, with and without normalization. As expected, without normalization (Figure \ref{F:C}) the algorithm tried to produce a frontier as wide as possible, in order to increase the distance from the antiutopia point. This behaviour led to dominated solutions and the learning process does not converge. Using the first normalization with $\beta = 2$ we were able to correct this behaviour, but the algorithm is still not able to cover the frontier completely (Figure \ref{F:D}). Using smaller $\beta$ the frontier was still too wide and contained dominated solutions, while higher $\beta$ led to smaller ones. The second normalization instead was ineffective. This is due, again, to the different magnitudo between the loss function and the area of the frontier, that makes the choice of $\mathbf{w}$ critical.

Finally Figure \ref{F:E_1} shows the frontier obtained using $\Indf_3$ with $\lambda = 135$. As expected, this loss function proved to be the best among the three, returning a good approximation of the Pareto frontier in terms of accuracy and covering, without using any normalization. Figure \ref{F:E_k} shows the Pareto frontier in the parameter space.

\begin{figure}[t]
\centering
\subfigure[\label{F:E_1}Frontier in the objective space.]{
 \includegraphics{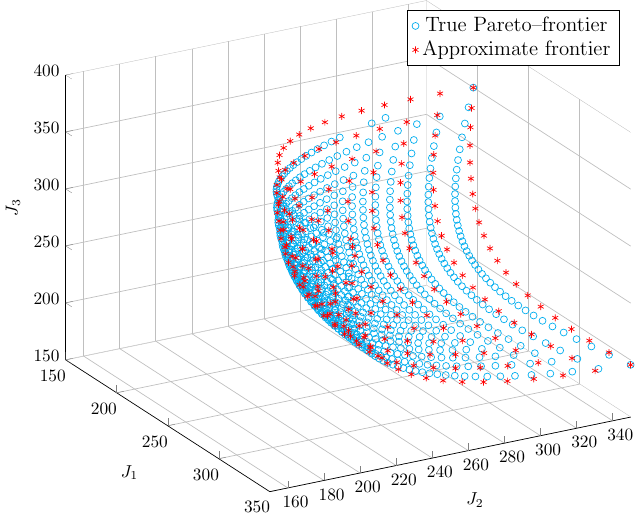}
}
\subfigure[\label{F:E_k}Frontier in the parameter space.
]{
 \includegraphics{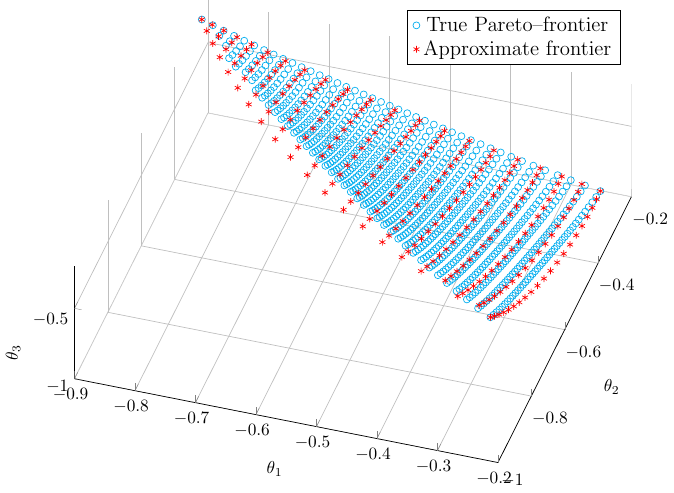}
}
\caption{Approximation of the Pareto frontier obtained by \PMGA[] using $\Indf_3$ for the 3-objective LQG.}
\label{F:E}
\end{figure}

\section{Water Reservoir}
A water reservoir can be modelled as a MOMDP with a continuous state variable $s$ representing the water volume stored in the reservoir, a continuous action $a$ that controls the water release, a state-transition model that depends also on the stochastic reservoir inflow $\epsilon$, and a set of conflicting objectives.  
For a complete description of the problem, the reader can refer to~\cite{pianosi2013tree}.

In this work we consider two objectives: flooding along the lake shores and irrigation supply. The immediate rewards are defined by
\begin{align*}
 \rmodel_1(s_t,a_t,s_{t+1}) &= -\max(h_{t+1}-\bar{h},0)\\
 \rmodel_2(s_t, a_t, s_{t+1}) &= -\max(\bar{\varrho}-\varrho_t,0)\\
\end{align*}
where $h_{t+1}=\sfrac{s_{t+1}}{S}$ is the reservoir level (in the following experiments $S=1$), $\bar{h}$ is the flooding threshold ($\bar{h}=50$), $\varrho_t = \max(\underline{a}_t, min(\bar{a}_t, a_t))$ is the release from the reservoir and $\bar{\varrho}$ is the water demand ($\bar{\varrho}=50$).
$\rmodel_1$ denotes the negative of the cost due to the flooding excess level and $\rmodel_2$ is the negative of the deficit in the water supply.

Like in the original work, the discount factor is set to $1$ for all the objectives and initial state is drawn from a finite set. However, different settings are used for learning and evaluation. In the learning phase $100$ episodes by $100$ steps are used (like in the original work), while the evaluation phase exploits $100,000$ episodes by $100$ steps.

Since the problem is continuous we exploit a Gaussian policy model $$\pi(a|s,\ppvect) = \mathcal{N}\left(\transpose{\nu(s)}\kappa,\sigma\right),$$
where $\nu : \statespace \to \realspace^{\dparam}$ are the basis functions and $\dparam = |\ppvect|$.
Since the optimal policies for the objectives are not linear in the state variable, a radial basis approximation is used:
$\nu(s) = \left[e^{-\sfrac{\norm[2]{s-c_i}}{w_i}}\right]_{i=1}^{\dparam},$
where the centres $c_i$ are placed at $0$, $50$, $120$ and $160$, and the widths are $50$, $20$, $40$ and $50$.

\begin{figure}[t]
\centering
 \includegraphics{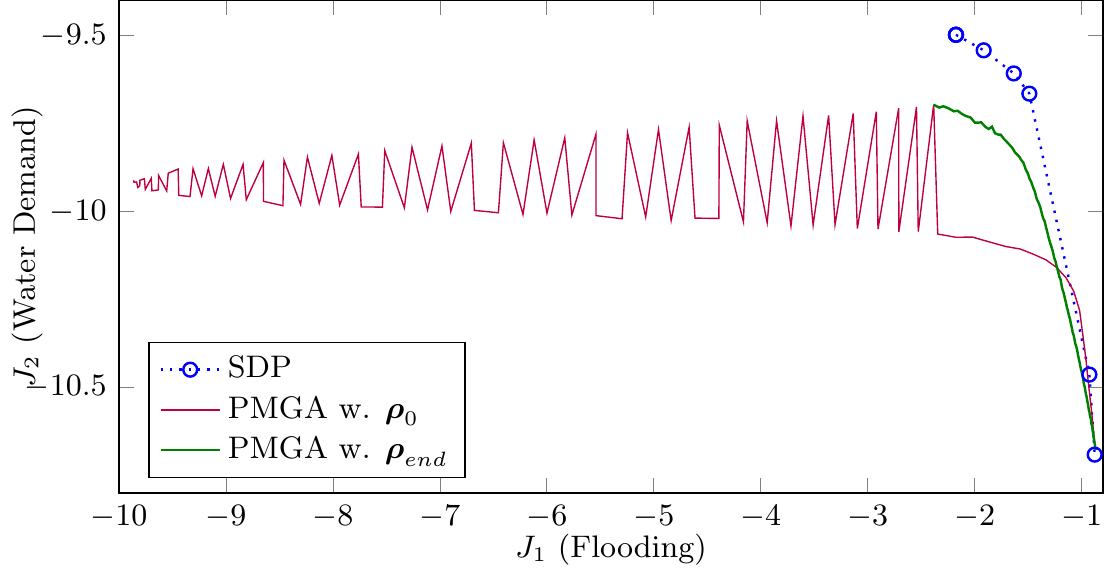}
\caption{Initial and final frontiers for a learning process with $\Indf_1(\jvect,\mathbf{p}_u)$.}
\label{F:dam2obj_utopia_P3_start}
\end{figure}

\begin{figure}[t]
\centering
\subfigure[\label{F:dam2obj_utopia_P3}]{
 \includegraphics{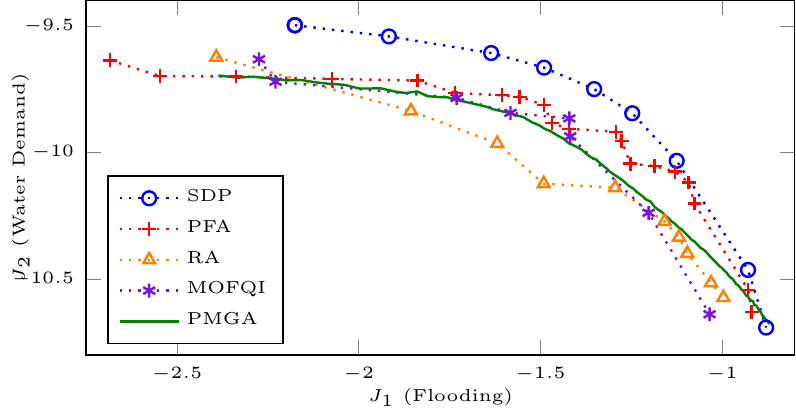}
}
\subfigure[\label{F:dam2obj_utopia_P3_Jr}]{
 \includegraphics{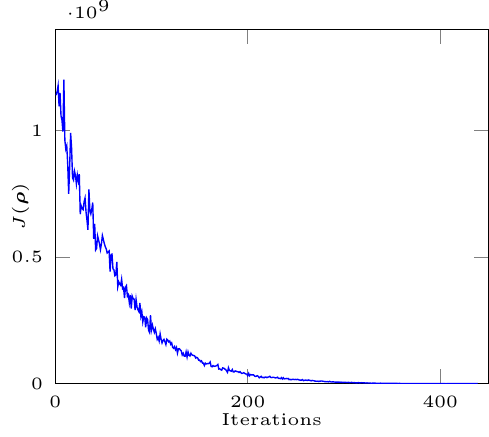}
}
\caption{Results for the water reservoir domain. Using utopia-based loss function $J(\curveparam)$ trend is convergent (on the right) and the frontier returned is comparable to the ones obtained with state-of-the-art algorithms.}
\label{F:dam}
\end{figure}

\subsubsection{Results}
We used the following parametrization, forced to pass near the estreme points of the Pareto frontier:
\begin{align*}
\theta_1 &= 61.4317 + (-11.4317 + \rho_1)t - \rho_1t^2
\\
\theta_2 &= -64.1980 + (14.1980 + \rho_2)t - \rho_2t^2
\\
\theta_3 &= 10.6159 + (-3.6159 + \rho_3)t - \rho_3t^2
\\
\theta_4 &= -22.8306 + (44.8306 + \rho_4)t - \rho_4t^2
\\
\theta_5 &= 37.8708 + (67.1292 + \rho_5)t - \rho_5t^2
\end{align*}
A constant variance $\sigma = 0.1$ has been chosen.

In order to show the capability of the approximate algorithm we have decided to test the simplest metric, that is, the utopia--based indicator. We start the learning from an arbitrary parametrization $\curveparam_0 = \mathbf{-20}$. Figure \ref{F:dam2obj_utopia_P3_start} reports the initial and the final frontiers obtained with out algorithm. We can notice that, even starting far from the true Pareto frontier, out algorithm is able to approach it, increasing covering and accuracy of the approximate frontier.

Figure \ref{F:dam2obj_utopia_P3} reports the final frontier obtained with different algorithms. The approximation obtained by our algorithm is comparable to the other results, however, our approach is able to produce a continuous frontier approximation.

It is important to notice that, due to the fact that the transition function of the domain limits the action in the range of admissible values ($a_t \in [\underline{a}_t,\bar{a}_t]$), there are infinite policies with equal performance that allow the agent to release more than the reservoir level or less than zero. To overcome this problem~\cite{Parisi2014morl} introduces a penalty term $p$ in the reward ($p= -\max(a_t - \bar{a}_t, \underline{a}_t - a_t)$) during the learning phase. With our approach this modification was unnecessary, as the algorithm is able to learn without the penalty. We also tried adding it during the learning phase, but the frontier returned was exactly the same.

\section{Metrics $\Indf_3$ tuning}\label{S:tuning}
In this Section we want to examine more deeply the tuning of mixed metric parameters, in order to provide the reader better insights for a correct use of such metric. \PMGA[] performance, indeed, strongly depends on the indicator used and, thereby, their setting is critical. To be more precise, mixed metric, which obtained the best approximate Pareto--frontiers in the experiments, includes a trade-off between accuracy and covering, expressed by some parameters.

% \subsection{$\Indf_3$ tuning}\label{SS:mix1_tuning}
The indicator we are going to analyze is
\begin{equation*}
 \Indf_3(\jvect) = \Indf_1(\jvect,\mathbf{p}_{AU}) \cdot w(\jvect)
\end{equation*}
where $w(\jvect)$ is a penalization term, i.e., it is a monotonic function that decreases as $\Indf_2(\jvect)$ increases. In the previous Sections we proposed $w(\jvect) = 1 - \lambda \Indf_2(\jvect)$. In this way we take advantage of the expansive behavior of the antiutopia--based indicator and the accuracy of the optimality--based indicator $\Indf_2$. In this Section we are going to study the performance of this metric on varying $\lambda$, proposing a simple tuning process. The idea is to set $\lambda$ to an initial value (for example 1) and then increase (or dicrease) it if the approximate frontier contains dominated solutions (or is not large enough).
Figure \ref{F:tuning_mix1} shows different approximate frontiers obtained with different $\lambda$. Starting with $\lambda = 1$ the indicator behaves like $\Indf_1(\jvect,\mathbf{p}_{AU})$, meaning that $\lambda$ was too small. Using $\lambda = 1.5$ (Figure \ref{F:tuning_mix1_a}) the algorithm converges but the approximate frontier still contains dominated solutions. Increasing $\lambda$ to 1.5 (Figure \ref{F:tuning_mix1_b}) dominated solutions disappear. Finally, with $\lambda = 2.5$ (Figure \ref{F:tuning_mix1_c}) the approximate frontier becomes shorter and Pareto--optimal solutions are discarded, meaning that we increased $\lambda$ too much.

\begin{figure}[t]
\centering
\subfigure[\label{F:tuning_mix1_a}$\lambda = 1.5$]{
 \includegraphics{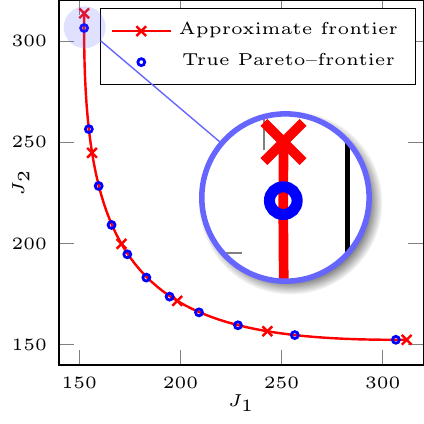}
}
\subfigure[\label{F:tuning_mix1_b}$\lambda = 2$]{
 \includegraphics{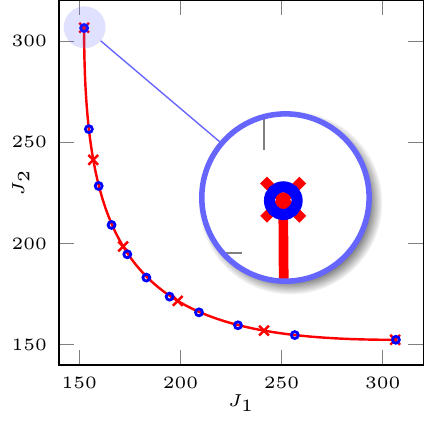}
}
\subfigure[\label{F:tuning_mix1_c}$\lambda = 2.5$]{
 \includegraphics{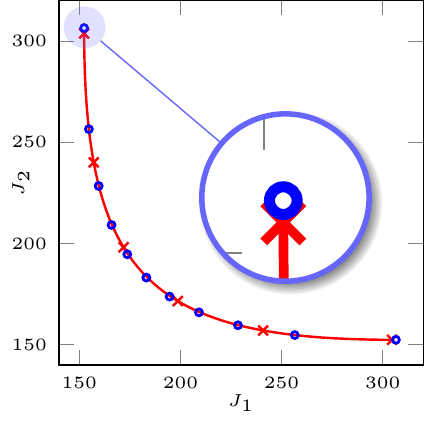}
}
\caption{Approximate frontiers learned by \PMGA[] using $\Indf_3$ on varying $\lambda$. Figure \subref{F:tuning_mix1_a} has dominated solutions and \subref{F:tuning_mix1_c} is not wide enough. On the contrary, \subref{F:tuning_mix1_b} achieves both accuracy and covering.}
\label{F:tuning_mix1}
\end{figure}

\bibliography{./rlbibdb}
\bibliographystyle{unsrt}
\end{document}